\newtheorem{theorem}{Theorem}
\newtheorem{proposition}{Proposition}
\newtheorem{lemma}{Lemma}
\newtheorem{definition}{Definition}
\title{\bf Building Hybrid B-Spline And Neural Network Operators
}
\author{Raffaele Romagnoli, Jasmine Ratchford, and Mark H. Klein
\thanks{R. Romagnoli is with the Department of Electrical and Computer Engineering, Carnegie Mellon University, Pittsburgh, PA 15213, USA:
        {\tt\small rromagno@andrew.cmu.edu}}
\thanks{J. Ratchford and M. H. Klein are with the Software Engineering Institute, Carnegie Mellon university, Pittsburgh, PA 15235, USA:
        {\tt\small  jratchford@sei.cmu.edu }
        {\tt\small  mk@sei.cmu.edu }}
}
\begin{document}

\maketitle
\begin{abstract}
Control systems are indispensable for ensuring the safety of cyber-physical systems (CPS), spanning various domains such as automobiles, airplanes, and missiles. Safeguarding CPS necessitates runtime methodologies that continuously monitor safety-critical conditions and respond in a verifiably safe manner. A fundamental aspect of many safety approaches involves predicting the future behavior of systems. However, achieving this requires accurate models that can operate in real time.
Motivated by DeepONets, we propose a novel strategy that combines the inductive bias of B-splines with data-driven neural networks to facilitate real-time predictions of CPS behavior. We introduce our hybrid B-spline neural operator, establishing its capability as a universal approximator and providing rigorous bounds on the approximation error. These findings are applicable to a broad class of nonlinear autonomous systems and are validated through experimentation on a controlled 6-degree-of-freedom (DOF) quadrotor with a 12 dimensional state space.
Furthermore, we conduct a comparative analysis of different network architectures, specifically fully connected networks (FCNN) and recurrent neural networks (RNN), to elucidate the practical utility and trade-offs associated with each architecture in real-world scenarios.
\end{abstract}

\section{Introduction}
Control systems play a vital role in ensuring the safety of cyber-physical systems (CPS), including automobiles, airplanes, and missiles \cite{platzer2018logical}. During design, mathematical techniques are employed to prevent these systems from entering unsafe states. However, due to the dynamic interaction with the physical environment, it is impossible to foresee every potential safety hazard. Therefore, development-time methodologies often need to be supplemented with runtime approaches \cite{de2019mixed} that continuously monitor safety-critical conditions and react in a verifiably safe manner.

An approach to ensuring safety involves predicting the system's future behavior. However, this approach requires that model of behavior is accruate and can be executed in real time. This is exceedingly challenging for complex systems. The notable success of machine learning (ML) techniques has drawn researchers' attention towards scientific ML (SciML) methods, which can learn from governing equations and data \cite{baker2019workshop}\cite{lu2021learning} and provide real-time solutions. Scientific ML has been recently used to control applications for system identification, model predictive control (MPC), analysis and verification, and digital twins \cite{nghiem2023physics}\cite{piquad_2022}. 




We are inspired by DeepONets \cite{lu2021learning}, a neural network architecture strategy that can be used to learn generalized non-linear operators. For an autonomous system, these spaces could be considered the initial conditions, and the to a unique solution (trajectory) that transports the autonomous system to a desired end state. Although there are several different types of neural operators, a primary advantage of DeepONets is their architecture, which is primarily composed of separate neural networks that learn basis functions and coefficients, enables the variation which we study in this paper.


A primary feature of DeepONets is that they learn basis functions for representing the an approximate solution. This is an advantage for complex systems where the basis functions are not easily described. However, it introduces two places where the stochastic nature of statistical models introduce uncertainty: the trunk net where the basis functions are learned and in the branch net where the coefficients of the basis functions are learned. In contrast, estimating trajectories from initial conditions, as might be useful for autonomous systems safety analysis, have established, well-researched, methods for describing continuous bases that impose an inductive bias on the data-based learning - splines. In this paper we examine solutions that combine neural operator theory with a learned basis function that reflects a strong inductive bias. Specifically, we use B-splines as the basis functions \cite{de1978practical}.

The key parameters that need to be learned when using B-splines to approximate continuous functions are control points. We train a neural network to learn the mapping from a set of differential equations' initial conditions to the control points of a B-spline approximation. The general approach of using B-spline functions to approximate the solutions of PDEs in the field of Finite Element Methods (FEMs) is referred to as isogeometric analysis (IGA). A similar solution has been proposed in \cite{doleglo2022deep} where a neural network generates the B-spline coefficients for a parametrized family of vector fields to reduce computational costs.

This paper contributes by analyzing this approach within the framework of deep neural operators, demonstrating its property as a universal approximator, and providing bounds on the approximation error. These results are developed for a general nonlinear autonomous system, and the approach is tested on a controlled 6-degree-of-freedom (DOF) quadrotor with a state space of 12 dimensions. Additionally, a comparison between different network architectures, namely fully connected neural networks (FCNN) and recurrent neural networks (RNN), is provided.

A challenge in using ML-based approaches is proving bounds on the uncertainties. The method proposed in this paper aims to open a new direction towards this goal, which is fundamental in all aspects of CPSs, particularly for safety. We distinguish our focus research research that focuses on incorporating physics \cite{piquad_2022} for improved predictive performance and other physics-based guarantees. In light of this, the choice of B-splines is motivated by their interesting properties, such as the convex hull property, where the predicted solution belongs to the convex hull generated by the control points. This property can be beneficial for safety inference, as it only requires checking a finite number of control points. These aspects represent the next steps we aim to address in the near future along the generalization to nonautonomous nonlinear systems.

\section{Preliminaries}
This section is divided in the following parts: nonlinear systems \cite{Khalil:1173048}, neural networks \cite{bittanti1996identification}, the universal approximation theorem for nonlinear operators \cite{chen1995universal},  B-splines functions \cite{jetto2017b} and finally the problem statement.
\subsection{Nonlinear Systems}\label{subsec_A}
Consider a nonlinear system expressed in the state-space form:
\begin{equation}\label{eqn:nl_sys}
    \dot{x} = f(t, x)
\end{equation}
where \( x \in \mathbb{R}^n \) is the state vector with \( n \) components, and the function \( f : \mathbb{R} \times \mathbb{R}^n \rightarrow \mathbb{R}^n \) is Lipschitz on a ball around the initial condition \( x_0 \in \mathcal{B}_r = \left\lbrace x \in \mathbb{R}^n : \lVert x - x_0 \rVert \leq r \right\rbrace \) and for the time interval \( [t_0, t_1] \): \(\lVert f(x) - f(y) \rVert \leq L \lVert x - y \rVert\) for all \( x, y \in \mathcal{B}_r \). $\Vert \cdot \Vert$ is the Euclidean norm. From the local Cauchy's theorem, there exists a unique solution on the time interval \( [t_0, t_0 + \delta] \) with a positive \( \delta \in \mathbb{R} \). According to Peano's theorem, the solution can be written as:
\begin{equation} \label{VolterraInt}
    x(t) = x_0 + \int_0^t f(s, x(s)) ds
\end{equation}
which is a continuous function \( x : \mathbb{R} \rightarrow \mathbb{R}^n \). Let us consider the space of continuous functions over the time interval \( [a, b] \), defined as \( \mathcal{X} = C([a,b]; \mathbb{R}^n) \), which is a Banach space with norm \( \lVert x \rVert_C = \max_{t\in [0, \delta]} \lVert x(t) \rVert \). Considering \(\mathcal{S} = \left\lbrace x \in \mathcal{X} : \lVert x - x_0 \rVert_C \leq r \right\rbrace\) it is possible to show that the right-hand side of \eqref{VolterraInt} is a mapping \( \mathcal{P} : \mathcal{S} \rightarrow \mathcal{S} \), and moreover, this mapping is a contraction:
\begin{equation}\label{mapping_P}
    x(t) = \mathcal{P}(x)(t)
\end{equation}
where the function \( x(t) \) is a fixed point.

We assume that the solution can be extended over the time interval \( [0, T] \) and the Lipschitz condition holds for a compact set \( W \subset \mathbb{R}^n \). Equation \eqref{eqn:nl_sys} is the general representation of a non-autonomous system, where the explicit dependence on time \( t \) can be due to an external input \( u \in \mathbb{R}^p \):
\[
    \dot{x} = f(x, u)
\]
If the input signal is a state feedback control law of the form \( u = \gamma(x) \), where \( \gamma : \mathbb{R}^n \rightarrow \mathbb{R}^n \), the resulting closed-loop system can be represented as an autonomous system of the form 
\begin{equation}\label{eqn:aut_system}
    \dot{x} = f(x)
\end{equation}
Without loss of generality, we assume that $x = {0}$ is an equilibrium point (i.e., $ f({0} ) = {0}$) for \eqref{eqn:aut_system}, hence the origin is contained in \( W \).

\subsection{Neural Networks}
The fundamental element of a neural network (NN) is the standard neuron, which takes $n$ inputs $\{ u_1, u_2, \ldots, u_n \}$, where $u_i \in \mathbb{R}$, and transforms them into a scalar value $y \in \mathbb{R}$. The activation signal for the $i$-th neuron is computed as:
\begin{equation}\label{activation_fun}
    s = u_1\theta_{i,1} + u_2\theta_{i,2} + \cdots + u_n \theta_{i,n} + \theta_{i,0}
\end{equation}
where $\theta_{i,j} \in \mathbb{R}$ with $j=1,\ldots,n$ are the weights that need to be estimated, and $\theta_{i,0}$ is called the bias. The activation signal \eqref{activation_fun} serves as the input to the activation function $\sigma: \mathbb{R} \rightarrow  \mathbb{R}$, which is typically nonlinear (e.g., sigmoidal, ReLU, etc.). Assuming we have a single-layer network with $N$ parallel neurons that use the same activation function, the output can be expressed as:
\begin{equation}\label{scalar_out}
    \begin{split}
        y &= \sum_{i=1}^N w_i \sigma\left(\sum_{j=1}^n \theta_{i,j}u_j+\theta_{i,0}\right) \\
        &= \sum_{i=1}^N w_i \sigma\left(\theta_i^T u + \theta_{i,0}\right)
    \end{split}
\end{equation}
where $\theta_{i}=[\theta_{i,1},\ldots,\theta_{i,n}]^T$, $u=[u_1,\ldots,u_n]^T$. If the output layer has a dimension greater than one, \eqref{scalar_out} describes the $m$-th component of the output vector:
\begin{equation}\label{k_component}
    y_m =  \sum_{i=1}^N w_{i,m} \sigma\left({\theta_{i,m}}^T u + \theta_{i,0}^{m}\right)
\end{equation}
In the case of multi-layer networks known as feedforward neural networks (FNNs), the output of one layer becomes the input of the next one:
\begin{equation}\label{multi_layer}
    y^{[l+1]}_m = \sum_{i=1}^N w_{i,m} \sigma\left({\theta_{i,m}^{[l+1]}}^T y^{[l]} + \theta_{i,0}^{m^{[l+1]}}\right)
\end{equation}
where $[l+1]$ indicates the $(l+1)$-th layer, and $y^{[l]}$ is the corresponding input vector, which is the output of the previous layer. Equations \eqref{k_component} and \eqref{multi_layer} illustrate how a single-layer network with scalar output can be extended to handle vector outputs and multi-layer FNNs. Consequently, the theory of universal approximation of neural networks primarily focuses on the case described in \eqref{scalar_out}. In the following section, we provide basic definitions and results on the universal approximation of nonlinear operators with neural networks.

\subsection{The Universal Approximation Theorem for Nonlinear Operators}
By considering a specific class of activation functions, Tauber-Wiener (TW) functions, we present three main theorems from \cite{chen1995universal}. Theorems 1 and 2 deal with the approximation of functions and functionals with NNs, respectively. Finally, Theorem 3 combines the two previous results to approximate nonlinear continuous operators.

\begin{definition}
A function $\sigma: \mathbb{R} \rightarrow \mathbb{R}$ is called a Tauber-Wiener (TW) function if all linear combinations $\sum_{i=1}^N c_i\sigma(\lambda_i z+\theta_i)$, $\lambda_i, \theta_i, c_i \in \mathbb{R}$, $i=1,2,...,N$ are dense in every $C([a,b];\mathbb{R})$.
\end{definition}

Let $g: K \rightarrow \mathbb{R}$, where $K \subset \mathbb{R}$ is a compact set, $\mathcal{U} \subset C(K;\mathbb{R})$ is a compact set, and $\sigma \in (TW)$ is an activation function.

\begin{theorem}
For any $\epsilon > 0$, there exists a positive integer $N$, $\theta_0^i \in \mathbb{R}$, $\theta_i \in \mathbb{R}^n$, $i=1,...,N$ independent of $g \in C(K;\mathbb{R})$, and constants $c_i(g)$, $i=1,...,N$ depending on $g$ such that
\begin{equation}
    \left\vert g(z) - \sum_{i=1}^N c_i(g)\sigma(\theta_i^Tz + \theta_0^i) \right\vert < \epsilon
\end{equation}
holds for all $z \in K$ and $g \in \mathcal{U}$. Moreover, each $c_i(g)$ is a linear continuous functional defined on $\mathcal{U}$.
\end{theorem}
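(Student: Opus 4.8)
The plan is to unbundle the two difficulties hidden in the statement: producing an approximation uniform over the \emph{whole} compact family $\mathcal{U}$ while keeping the inner parameters $\theta_i,\theta_0^i$ common to every $g$, and separately arranging that the coefficients depend linearly and continuously on $g$. I would first upgrade Definition~1 from intervals to the given compact $K$: since $K\subset\mathbb{R}$ sits inside some $[a,b]$, the restriction map $C([a,b];\mathbb{R})\to C(K;\mathbb{R})$ is a continuous surjection (Tietze) sending each combination $\sum_i c_i\sigma(\lambda_i z+\theta_i)$ to one of the same form, so the $(TW)$ property forces finite combinations of the ridge functions $z\mapsto\sigma(\lambda z+\theta)$ to be dense in $C(K;\mathbb{R})$ as well.

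Next I would construct a single finite-rank, bounded, \emph{linear} ``near-identity'' operator that works for all of $\mathcal{U}$ and whose coefficients are point evaluations. Compactness of $\mathcal{U}$ gives equicontinuity (Arzel\`a--Ascoli): there is $\delta>0$ with $\lvert g(z)-g(z')\rvert<\epsilon/2$ for all $g\in\mathcal{U}$ whenever $\lvert z-z'\rvert<\delta$. Cover $K$ by finitely many balls $B(z_k,\delta/2)$, $k=1,\dots,r$, with $z_k\in K$, take a continuous partition of unity $\{\psi_k\}_{k=1}^r$ subordinate to this cover, and set $\mathcal{A}_0 g=\sum_{k=1}^r g(z_k)\psi_k$. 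Then $\lVert\mathcal{A}_0 g-g\rVert_{C(K)}<\epsilon/2$ for every $g\in\mathcal{U}$, while $g\mapsto g(z_k)$ is linear and continuous with norm one and $\lVert\mathcal{A}_0\rVert\le 1$.

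The third step replaces the bumps $\psi_k$ by ridge functions, exploiting that $r$, the nodes $z_k$, and the functionals $g\mapsto g(z_k)$ are \emph{already fixed}. Let $M=\sup_{g\in\mathcal{U}}\lVert g\rVert_{C(K)}<\infty$ and, using step one, pick $\tilde\psi_k=\sum_i c_{k,i}\,\sigma(\lambda_{k,i}z+\theta_{k,i})$ with $\lVert\psi_k-\tilde\psi_k\rVert_{C(K)}<\epsilon/(2rM)$. For $\mathcal{A}g=\sum_{k=1}^r g(z_k)\tilde\psi_k$ one then has $\lVert\mathcal{A}g-g\rVert_{C(K)}\le\sum_k\lvert g(z_k)\rvert\,\lVert\tilde\psi_k-\psi_k\rVert_{C(K)}+\epsilon/2<\epsilon$ on $\mathcal{U}$. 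Relabelling the pairs $(\lambda_{k,i},\theta_{k,i})$ into a single list $(\theta_i,\theta_0^i)$, $i=1,\dots,N$, and regrouping gives $\mathcal{A}g(z)=\sum_{i=1}^N c_i(g)\,\sigma(\theta_i z+\theta_0^i)$ with $c_i(g)=\sum_k c_{k,i}\,g(z_k)$, which is a linear continuous functional on $C(K;\mathbb{R})$ and hence on $\mathcal{U}$, and with $\theta_i,\theta_0^i$ depending only on $\mathcal{U}$ and $\epsilon$. That is precisely the asserted estimate.

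The step I expect to be the genuine obstacle is getting linearity and continuity of the coefficients simultaneously with the uniform bound; the temptation is to fix a ridge basis first and then take the best $C(K)$-approximation, but that map is neither linear nor even single-valued. The resolution is the ordering above: commit to the bottleneck $(r,z_k,g\mapsto g(z_k))$ \emph{before} choosing the ridge-approximation tolerance, so the perturbation $\psi_k\mapsto\tilde\psi_k$ is absorbed into $\epsilon$. The remaining points are routine --- equicontinuity and a subordinate partition of unity for a general compact $K\subset\mathbb{R}$, and the observation that the $c_i$, being fixed linear combinations of point evaluations, restrict to the advertised functionals on $\mathcal{U}$. (One may also view $\mathcal{A}_0$ as an instance of the metric approximation property of $C(K)$, but the partition-of-unity construction keeps the argument self-contained.)
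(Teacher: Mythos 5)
Your proof is correct, but it takes a genuinely different route from the paper, which in fact does not prove Theorem~1 at all: the result is imported verbatim from \cite{chen1995universal}, and the paper's only gloss is that in that proof the coefficients $c_i(g)$ arise as Fourier coefficients of $g$ (a linear extension to a cube, a truncated Fourier expansion whose order is chosen uniformly over the compact family, and a TW-approximation of each univariate trigonometric factor), which is precisely how linearity and continuity of the $c_i$ are obtained there. You instead build a finite-rank linear quasi-interpolant: compactness of $\mathcal{U}$ gives equicontinuity and a uniform bound (Arzel\`a--Ascoli), a partition of unity subordinate to a $\delta/2$-cover makes $\sum_k g(z_k)\psi_k$ uniformly $\epsilon/2$-close to $g$ on all of $\mathcal{U}$, and because the nodes $z_k$ and the point-evaluation functionals are fixed \emph{before} the ridge step, replacing each bump $\psi_k$ by a TW combination within $\epsilon/(2rM)$ preserves the bound while the coefficients remain fixed linear combinations of the evaluations $g(z_k)$, hence linear continuous functionals, with inner parameters depending only on $\mathcal{U}$ and $\epsilon$. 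Your version is more elementary and self-contained, and it produces coefficients that are point-evaluation functionals, pleasantly close in spirit to the sampled values $u(z_j)$ that appear in Theorem~2; the Fourier route buys the explicit coefficient structure the paper alludes to and, more importantly, handles the multivariate case directly. On that point, note that the statement's $\theta_i\in\mathbb{R}^n$ and $\theta_i^Tz$ betray that $K$ is really a compact subset of $\mathbb{R}^n$ in \cite{chen1995universal} (the paper's ``$K\subset\mathbb{R}$'' is a slip); your partition-of-unity core carries over unchanged, but your step one would then need density of the ridge functions $z\mapsto\sigma(\lambda^Tz+\theta)$ in $C(K;\mathbb{R})$ for multivariate $z$, which is exactly what the trigonometric-ridge argument of \cite{chen1995universal} supplies. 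Minor bookkeeping aside (take $\epsilon/4$ tolerances if you want the strict inequality, and observe that after relabelling each coefficient is simply $c_{k,i}\,g(z_k)$ rather than a sum over $k$), the argument is sound.
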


The last statement is due to the fact that in the proof, $c_j(g)$ represents the Fourier coefficients of $g$. If $g: \mathbb{R} \rightarrow \mathbb{R}$ and $K=[a, b]$, it can be associated with a specific time interval. Note that $\mathbb{R}^n$ with $n \geq 1 \in \mathbb{N}$ is a Banach space.

Now, let $\mathcal{X}$ be a Banach space, $\mathcal{K} \subseteq \mathcal{X}$, $\mathcal{V}$ a compact set in $C(\mathcal{K};\mathbb{R})$, and $g$ a continuous functional defined on $\mathcal{V}$.

\begin{theorem}
For any $\epsilon > 0$, there exist a positive integer $N$, $m$ points $z_1,...,z_m \in \mathcal{K}$, and real constants $c_j$, $\zeta_0^j$, $\xi_{i,j}$, $i=1,...,N$, $j=1,...,m$ such that
\begin{equation}
    \left\vert g(u) - \sum_{i=1}^N c_i \sigma\left(\sum_{j=1}^m \xi_{i,j}u(z_j) + \zeta_{0}^j\right) \right\vert < \epsilon
\end{equation}
holds for all $u \in \mathcal{V}$.
\end{theorem}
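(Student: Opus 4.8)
\noindent\emph{Proof idea.} The plan is to reduce the approximation of the continuous functional $g$ to a finite--dimensional function--approximation problem and then invoke Theorem 1. First I would extract the consequences of compactness: by the Arzel\`a--Ascoli theorem the compact set $\mathcal{V}\subset C(\mathcal{K};\mathbb{R})$ is uniformly bounded and equicontinuous on the (compact) set $\mathcal{K}$, and $g$, being continuous on the compact metric space $\mathcal{V}$, is uniformly continuous there. Fix $\epsilon>0$; a parameter $\eta>0$ will be chosen small at the end.

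By equicontinuity there is $\delta>0$ such that $\lVert x-x'\rVert_{\mathcal{X}}<\delta$ implies $\lvert u(x)-u(x')\rvert<\eta$ for every $u\in\mathcal{V}$. Cover $\mathcal{K}$ by finitely many $\delta$--balls, let $z_1,\dots,z_m$ be their centers, and let $\{p_j\}_{j=1}^m$ be a continuous partition of unity on $\mathcal{K}$ subordinate to this cover. Then $\bigl\lvert u(x)-\sum_{j=1}^m u(z_j)p_j(x)\bigr\rvert<\eta$ for all $u\in\mathcal{V}$ and $x\in\mathcal{K}$; equivalently, the continuous linear reconstruction map $L:\mathbb{R}^m\to C(\mathcal{K};\mathbb{R})$ given by $L(a)(x)=\sum_j a_j p_j(x)$ satisfies $\lVert L(Tu)-u\rVert_C<\eta$, where $Tu:=(u(z_1),\dots,u(z_m))$. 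The sampling map $T:\mathcal{V}\to\mathbb{R}^m$ is continuous, so $K':=T(\mathcal{V})\subset\mathbb{R}^m$ is compact.

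Next I would \emph{approximately factor} $g$ through $T$. Since $\mathcal{V}$ is a closed subset of the metric space $C(\mathcal{K};\mathbb{R})$, extend $g$ by Tietze's theorem to a continuous functional $\bar g:C(\mathcal{K};\mathbb{R})\to\mathbb{R}$, and set $\tilde g:=\bar g\circ L:\mathbb{R}^m\to\mathbb{R}$, which is continuous. On the compact set $\mathcal{V}\cup L(K')$ the map $\bar g$ is uniformly continuous with some modulus $\omega$, so for every $u\in\mathcal{V}$ one has $\lvert\tilde g(Tu)-g(u)\rvert=\lvert\bar g(L(Tu))-\bar g(u)\rvert\le\omega(\eta)$. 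Finally apply Theorem 1 (with $n=m$) to $\tilde g$ on the compact set $K'$: there exist $N$, $\xi_i\in\mathbb{R}^m$, $\zeta_0^i\in\mathbb{R}$ and $c_i$ with $\bigl\lvert\tilde g(w)-\sum_{i=1}^N c_i\,\sigma(\xi_i^{T}w+\zeta_0^i)\bigr\rvert<\epsilon/2$ for all $w\in K'$. Choosing $\eta$ so small that $\omega(\eta)<\epsilon/2$ and substituting $w=Tu$, i.e. $\xi_i^{T}w=\sum_j\xi_{i,j}u(z_j)$, the triangle inequality gives $\bigl\lvert g(u)-\sum_i c_i\,\sigma(\sum_j\xi_{i,j}u(z_j)+\zeta_0^i)\bigr\rvert<\epsilon$ for all $u\in\mathcal{V}$, which is the claim.

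The main obstacle is the factorization step: $T$ need not be injective, so no exact $\tilde g$ with $\tilde g\circ T=g$ exists, and a naive pointwise definition $\tilde g(w):=g(\text{some preimage of }w)$ need not be continuous. The partition-of-unity reconstruction $L$ is exactly what resolves this: it is a continuous one--sided inverse of $T$ up to $C$--norm error $\eta$, and uniform continuity of the extension of $g$ converts that $\eta$ into a small error on functional values. What remains is only quantitative bookkeeping: $\eta$ must be fixed after $\omega$ and the Theorem 1 tolerance are known, so that the two error terms add to less than $\epsilon$.
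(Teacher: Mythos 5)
The paper itself gives no proof of this statement: Theorem 2 is imported verbatim from \cite{chen1995universal}, so there is no in-paper argument to compare against. Your proposal reconstructs essentially the standard proof from that reference, and it is sound: Arzel\`a--Ascoli gives uniform boundedness and equicontinuity of $\mathcal{V}$, a finite $\delta$-net $z_1,\dots,z_m$ of the compact set $\mathcal{K}$ with a partition-of-unity reconstruction $L$ reduces each $u$ to its sample vector $Tu$ up to a uniform $C$-norm error $\eta$, a Tietze extension of $g$ handles the fact that $T$ is not injective, and Theorem 1 applied on the compact set $T(\mathcal{V})\subset\mathbb{R}^m$ finishes the job. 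The one technical wrinkle is the order of quantifiers at the end: your modulus $\omega$ is the modulus of continuity of $\bar g$ on $\mathcal{V}\cup L(K')$, but $L$ and $K'$ depend on $\eta$ (through $\delta$ and the net), so ``choose $\eta$ with $\omega(\eta)<\epsilon/2$'' is circular as literally written. The standard repair is to fix $\eta$ before choosing the net, using continuity of $\bar g$ near the compact set $\mathcal{V}$: for continuous $\bar g$ and compact $\mathcal{V}$ there exists $\eta>0$ such that $u\in\mathcal{V}$, $\Vert v-u\Vert_C<\eta$ imply $\vert\bar g(v)-\bar g(u)\vert<\epsilon/2$ (a routine subsequence/Lebesgue-number argument, no compactness of the $\eta$-neighborhood needed); with that ordering, the rest of your argument goes through unchanged.
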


Now, let $\mathcal{K}_1 \subseteq \mathcal{X}$, $K_2 \subseteq \mathbb{R}^n$ be two compact sets, $\mathcal{V}$ a compact set in $C(\mathcal{K}_1;\mathbb{R})$, and $\mathcal{G}$ a nonlinear continuous operator mapping $\mathcal{V}$ into $C(K_2;\mathbb{R})$.

\begin{theorem}
For any $\epsilon > 0$, there exist positive integers $M$, $N$, $m$, constants $c_i^k$, $\zeta_0^{jk}$, $\xi_{i,j}^k \in \mathbb{R}$, points $\theta_k \in \mathbb{R}^n$, $z_j\in \mathcal{K}_1$, $i=1,...,M$, $k=1,...,N$, $j=1,...,m$ such that
\begin{equation}
   \begin{split}
    &\left\vert \mathcal{G}(u)(y) - \sum_{k=1}^N \sum_{i=1}^M c_i^k \sigma\left( \sum_{j=1}^m \xi_{i,j}^k u(z_j) + \zeta_0^{jk} \right) \right. \\
    &\hspace{4.5cm}\left. \cdot \sigma(\theta_k^Ty + \theta_0^k) \right. \Bigg\vert < \epsilon
\end{split}
\end{equation}
holds for all $u \in \mathcal{V}$ and $y \in K_2$.
\end{theorem}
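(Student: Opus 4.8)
The plan is to obtain this result by composing Theorem~1 and Theorem~2, matching the two-stage ``trunk/branch'' structure of the target network: an outer sum over basis functions $\sigma(\theta_k^T y+\theta_0^k)$ in the output variable $y$, whose coefficients are in turn approximated by inner networks that read finitely many samples $u(z_j)$ of the input. \emph{Step~1 (outer layer, uniform in $u$).} First I would fix attention on the image family $\mathcal{G}(\mathcal{V})\subset C(K_2;\mathbb{R})$. Since $\mathcal{G}$ is a continuous operator and $\mathcal{V}$ is compact, $\mathcal{G}(\mathcal{V})$ is a compact subset of $C(K_2;\mathbb{R})$, so Theorem~1 applies to this whole family at once (with domain $K_2\subseteq\mathbb{R}^n$): there exist an integer $N$ and parameters $\theta_k\in\mathbb{R}^n$, $\theta_0^k\in\mathbb{R}$, $k=1,\dots,N$, \emph{independent of the particular element of the family}, together with continuous linear functionals $c_k(\cdot)$ on $\mathcal{G}(\mathcal{V})$, such that
\begin{equation*}
  \left| \mathcal{G}(u)(y) - \sum_{k=1}^N c_k\!\big(\mathcal{G}(u)\big)\,\sigma\!\big(\theta_k^T y+\theta_0^k\big) \right| < \frac{\epsilon}{2}
\end{equation*}
for all $u\in\mathcal{V}$ and all $y\in K_2$.

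\emph{Step~2 (approximating the coefficients).} For each $k$ define $F_k(u):=c_k(\mathcal{G}(u))$. Being a composition of the continuous operator $\mathcal{G}$ with the continuous linear functional $c_k$, $F_k$ is a continuous functional on the compact set $\mathcal{V}\subset C(\mathcal{K}_1;\mathbb{R})$, so Theorem~2 applies to it. Applying Theorem~2 to each $F_k$ in turn, then letting $\{z_1,\dots,z_m\}$ be the union of all the sampling points produced and zero-padding the coefficient arrays (over both the sample index $j$ and the hidden-unit index $i$) to a common size, I obtain a common $m$, a common width $M$, and constants $c_i^k,\xi_{i,j}^k,\zeta_0^{jk}$ with
\begin{equation*}
  \left| F_k(u) - \sum_{i=1}^M c_i^k\,\sigma\!\Big( \sum_{j=1}^m \xi_{i,j}^k\,u(z_j) + \zeta_0^{jk} \Big) \right| < \delta
\end{equation*}
for all $u\in\mathcal{V}$ and all $k\in\{1,\dots,N\}$, with $\delta>0$ to be fixed below.

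\emph{Step~3 (splicing and error propagation).} Since $\sigma$ is continuous, it is bounded on the compact set $\{\theta_k^T y+\theta_0^k : y\in K_2,\ 1\le k\le N\}$; call this bound $B$. Substituting the Step~2 approximation of each coefficient $F_k(u)$ into the Step~1 sum and applying the triangle inequality,
\begin{equation*}
\begin{split}
  \Bigg| \mathcal{G}(u)(y) &- \sum_{k=1}^N \bigg[\sum_{i=1}^M c_i^k\,\sigma\Big( \sum_{j=1}^m \xi_{i,j}^k\,u(z_j) + \zeta_0^{jk} \Big)\bigg]\,\sigma\big(\theta_k^T y+\theta_0^k\big) \Bigg| \\
  &< \frac{\epsilon}{2} + N B \delta .
\end{split}
\end{equation*}
Choosing $\delta<\epsilon/(2NB)$ makes the right-hand side $<\epsilon$, uniformly over $u\in\mathcal{V}$ and $y\in K_2$; expanding the bracketed inner sum reproduces exactly the double-sum network in the statement, completing the argument.

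The step I expect to require the most care is Step~1: establishing that $\mathcal{G}(\mathcal{V})$ is compact in $C(K_2;\mathbb{R})$ — this is what allows the trunk parameters $\theta_k,\theta_0^k$ to be chosen \emph{once}, valid for every input $u$ simultaneously — and then checking that the induced coefficient functionals $F_k=c_k\circ\mathcal{G}$ inherit continuity on $\mathcal{V}$. Everything after that is routine bookkeeping: merging the finitely many sampling points across the $N$ functionals, zero-padding to a common architecture, and the elementary bound on $\sigma$ over a compact set. It is worth noting that the genuine mathematical weight of the result sits in Theorems~1 and 2 themselves, which we are entitled to invoke here; Theorem~3 is essentially their structured composition.
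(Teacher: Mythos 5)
Your proposal is correct and follows essentially the same route as the paper's source: the paper does not prove this theorem itself but cites it (Theorem 5 of \cite{chen1995universal}) and explicitly describes it as the combination of Theorems~1 and~2, which is exactly your construction --- apply Theorem~1 to the compact image $\mathcal{G}(\mathcal{V})\subset C(K_2;\mathbb{R})$ to fix the trunk parameters, apply Theorem~2 to the coefficient functionals $c_k\circ\mathcal{G}$ on $\mathcal{V}$, merge sampling points, and propagate the error by the triangle inequality. The only small caveat is in Step~3: a Tauber--Wiener $\sigma$ need not be continuous, so the bound $B$ should be justified by the boundedness of $\sigma$ on the relevant compact argument set (automatic for the sigmoidal/ReLU-type activations the paper has in mind) rather than by continuity alone.
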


\subsection{B-splines}

B-splines are piece-wise polynomial functions derived from slight adjustments of Bezier curves, aimed at obtaining polynomial curves that tie together smoothly. In this work, we are interested not in representing geometric curves, but functions as in \cite{jetto2017b}. For this reason, we consider a parameter $t \in K =[a,b] \subseteq \mathbb{R}$, and $(c_i)_{i=1}^{\ell} \in \mathbb{R}$ as a set of $\ell$ control points for a spline curve $s(t)$ of degree $d$, with non-decreasing knots $(\hat{t}_i)_{i=1}^{\ell+d+1}$.

\begin{equation}\label{scalarbspline}
s(t) = \sum_{i=1}^{\ell} c_i B_{i,d}(t) \quad \text{for} \quad t \in K\subseteq \mathbb{R},
\end{equation}

where $B_{i,d}(t)$, $d > 1$, is given by the Cox-de Boor recursion formula \cite{de1978practical}:

\begin{equation}
B_{i,d}(t) = \frac{t - \hat{t}_i}{\hat{t}_{i+d} - \hat{t}_i} B_{i,d-1}(t) + \frac{\hat{t}_{i+d+1} - t}{\hat{t}_{i+d+1} - \hat{t}_{i+1}} B_{i+1,d-1}(t),
\end{equation}

and

\begin{equation}
B_{i,0}(t) = \begin{cases}
1, & \hat{t}_i \leq t < \hat{t}_{i+1}, \\
0, & \text{otherwise}.
\end{cases}
\end{equation}

\textit{Property 1:} Any value assumed by $s(t)$, $\forall t \in K$, lies in the convex hull of its $\ell + 1$ control points $(c_i)_{i=1}^{\ell}$.

\textit{Property 2:} Suppose that the number $\hat{t}_{i+1}$ occurs $m$ times among the knots $(\hat{t}_j)_{j=i-d}^{m+d}$ with $m$ an integer bounded by $1 \leq m \leq d + 1$, e.g., $\hat{t}_i < \hat{t}_{i+1} = \cdots = \hat{t}_{i+m} < \hat{t}_{i+m+1}$, then the spline function $s(t)$ has a continuous derivative up to order $d - m$ at knot $\hat{t}_{i+1}$.

This property implies that the smoothness of the spline can be adjusted using multiple knot points. A common choice is to set $m = d + 1$ multiple knot points for the initial and final knot points. This way, Equation \eqref{scalarbspline} assumes the first and final control points as initial and final values.

\textit{Property 3:} the B-spline basis functions are continuous in $t$, $B_{i,d}(t) \in C(K;\mathbb{R})$, and bounded \cite{prautzsch2002bezier}.

By defining the vectors 
\begin{equation}\label{compact_cB}
\begin{split}
    c &\triangleq [c_{1} \, c_{2} \, \dots \, c_{{\ell}}]^T\\
    B_d(t) &\triangleq [B_{1,d}(t) \, B_{2,d}(t) \, \dots \, B_{\ell,d}(t)]
\end{split}
\end{equation}
\eqref{scalarbspline} can be rewritten as \(s(t)=B_d(t)c\).

B-splines are generally used to represent curves. However, in our problem, we require a function of time that can be conceptualized as a 2D curve. Therefore, each control point should be a 2D point. To ensure that $s(t)$ is indeed a function of time, we associate the parameter $t$ with the time variable that varies within the interval $[a, b]$. We enforce that the position of each control point along the axis representing time $t$ remains fixed by partitioning the interval $[a, b]$ into $\ell - 1$ equispaced sub-intervals. By doing so, we obtain the formulation presented in \eqref{compact_cB}, where each control point is associated with only a scalar coefficient. 

For the purpose of this paper we are interested to the B-spline representation  for the approximation of the solution of \eqref{eqn:aut_system} with $x\in \mathbb{R}^n$ consists of a set of $n$ scalar B-spline, one for each component of the state vector. In general this B-spline is represented as
\begin{equation}\label{Bs_n}
    \underline{s}(t) = \underline{B}_d(t) \underline{c}
\end{equation}
where $\underline{s}:\mathbb{R}\rightarrow \mathbb{R}^n$, and
\begin{eqnarray}
\underline{B}_d(t) &\triangleq& \mathrm{diag}[B_d(t)]\\
\underline{c}&\triangleq& \left[\underline{c}_1^T, \cdots, \underline{c}_n^T\right]^T\\
\underline{c}_i &\triangleq& \left[c_{i1},\cdots,c_{i\ell}\right]^T.
\end{eqnarray}
The index $i$ indicates the $i$-th components of the multidimensional B-spline $\underline{s}$.


\subsection{Problem Statement}
In this work, our focus is on providing future state estimation of nonlinear closed-loop systems of the form \eqref{eqn:aut_system} by approximating the operator $\mathcal{P}(x)(t)$ \eqref{mapping_P} via a deep operator network that combines deep neural networks and B-spline functions. By momentarily assuming that \eqref{eqn:aut_system} is scalar, our goal is to approximate $\mathcal{P}(x)(t)$ as follows:
\begin{equation}\label{P_approx}
x(t) = \mathcal{P}(x)(t) \approx \hat{\mathcal{P}}(x)(t) = \sum_{i=1}^\ell c_i(x_0)B_{i,d}(t)
\end{equation}
for $t\in[0,T]$, where the coefficients $c_i(x_0)$ represent the B-spline's control points along the time interval $[0,T]$, determining the shape of $x(t)$ depending on the initial condition $x_0$. 
The $c_i(x_0)$ is a functional as depending of the initial condition $x_0$ which is approximated by a deep neural network. Building upon the theoretical framework established in \cite{chen1995universal}, we extend the theory to encompass \eqref{P_approx}. Subsequently, we delve into the multidimensional scenario, where $x\in \mathbb{R}^n$, particularly when addressing error bounds.

\section{B-spline-Based Deep Neural Operator}

Let us consider the same conditions of Theorem 3, modified for the problem of finding an approximating operator \eqref{P_approx} for the scalar case. We consider $\mathcal{X} = C([0,T];\mathbb{R})$ as a Banach space with the norm $\Vert x \Vert_C = \max_{t\in[0,T]}\Vert x(t)\Vert$. Let $\mathcal{S}\subseteq \mathcal{X}$ be a compact set, and the operator $\mathcal{P}(x)(t)$ maps an element of $\mathcal{S}$ into itself. The scalar initial condition $x_0 = x(0) \in \mathcal{S}$.

\begin{theorem}
For any $\epsilon > 0$, there exist positive integers $M$, $\ell$, constants $c_i^k$, $\zeta_0^{jk}$, $\xi_{i}^k \in \mathbb{R}$, functions $B_{k,d}(t)\in C([0,T],\mathbb{R})$, and $x_0 \in \mathcal{S}$ such that
\begin{equation}\label{new_bs_op}
    \left\vert \mathcal{P}(x)(t) - \sum_{k=1}^\ell \sum_{i=1}^M c_i^k \sigma \left(\xi_{i}^k x_0 + \xi_0^{ik}\right) \cdot B_{k,d}(t) \right\vert \leq \epsilon
\end{equation}
holds for all $x_0 \in \mathcal{S}$ and $t \in K_2$.
\end{theorem}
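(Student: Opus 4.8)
The plan is to reproduce the three-layer argument behind Theorem~3, but to exploit the autonomous structure of \eqref{eqn:aut_system}, which collapses the $m$ sensor readings of a general branch net down to the single scalar $x_0$, and to replace the trunk activations $\sigma(\theta_k^{T}y+\theta_0^{k})$ by the B-spline basis $B_{k,d}(t)$. Write $S_0=\{\,x(0):x\in\mathcal{S}\,\}\subset\mathbb{R}$, which is compact, and let $\Phi:S_0\to\mathcal{S}$ be the solution map $x_0\mapsto x(\cdot\,;x_0)$, so that the fixed point of $\mathcal{P}$ satisfies $\mathcal{P}(x)(t)=\Phi(x_0)(t)$. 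By local existence and uniqueness (Cauchy) together with continuous dependence on the initial datum for the Lipschitz field $f$, the map $\Phi$ is continuous, hence $\Phi(S_0)$ is a compact subset of $C([0,T];\mathbb{R})$. This reduction is what will let the ``branch'' functionals depend on $x_0$ alone.

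First I would treat the trunk (B-spline) layer. Spline spaces of a fixed degree $d$ with refined knots are dense in $C([0,T];\mathbb{R})$, and a linear quasi-interpolant $Q_\ell g=\sum_{k=1}^{\ell}\lambda_k(g)B_{k,d}$ has error controlled by the modulus of continuity of $g$ on the mesh; since $\Phi(S_0)$ is compact, hence equicontinuous by Arzel\`a--Ascoli, a single choice of $d$, knot vector and number $\ell$ of control points gives
\begin{equation*}
\Bigl\lVert \Phi(x_0)-\sum_{k=1}^{\ell}\lambda_k\bigl(\Phi(x_0)\bigr)\,B_{k,d}\Bigr\rVert_{C}<\frac{\epsilon}{2}\qquad\text{for all }x_0\in S_0,
\end{equation*}
with each $\lambda_k$ a bounded linear functional on $C([0,T];\mathbb{R})$ (a finite combination of point evaluations). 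Define $c_k(x_0)\triangleq\lambda_k(\Phi(x_0))$; as the composition of the continuous map $\Phi$ with the continuous linear functional $\lambda_k$, each $c_k:S_0\to\mathbb{R}$ is continuous on the compact set $S_0\subset\mathbb{R}$.

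For the branch layer, apply Theorem~1 to the finite (hence compact) set $\{c_1,\dots,c_\ell\}\subset C(S_0;\mathbb{R})$ with the $(TW)$ activation $\sigma$: there exist a common integer $M$ and constants $c_i^{k},\xi_i^{k},\xi_0^{ik}$ such that
\begin{equation*}
\Bigl\lvert c_k(x_0)-\sum_{i=1}^{M}c_i^{k}\,\sigma\bigl(\xi_i^{k}x_0+\xi_0^{ik}\bigr)\Bigr\rvert<\delta\qquad\text{for all }x_0\in S_0,\ k=1,\dots,\ell,
\end{equation*}
and I would fix $\delta$ only after $d,\ell$ and the knots are chosen, so that $\delta\sum_{k=1}^{\ell}\max_{t}\lvert B_{k,d}(t)\rvert<\epsilon/2$ (finite by Property~3). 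Assembling by the triangle inequality, for every $x_0\in S_0$ and $t\in K_2$ the quantity in \eqref{new_bs_op} is bounded by $\lVert\Phi(x_0)-\sum_k c_k(x_0)B_{k,d}\rVert_{C}$ plus $\sum_k\lvert c_k(x_0)-\sum_i c_i^{k}\sigma(\xi_i^{k}x_0+\xi_0^{ik})\rvert\,\lvert B_{k,d}(t)\rvert<\epsilon/2+\epsilon/2=\epsilon$.

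The step I expect to be the main obstacle is the trunk estimate: verifying that one fixed triple (degree, knot vector, $\ell$) produces \emph{uniform} spline approximation over the entire compact family $\Phi(S_0)$ — this is exactly where compactness/equicontinuity is needed — and, simultaneously, that the coefficient functionals $\lambda_k$ can be taken linear and continuous, since this is what legitimizes treating $c_k(x_0)$ as a continuous functional eligible for Theorem~1 and what keeps the two error contributions additive. A secondary point worth making explicit is the reduction through $\Phi$: on the inputs relevant to \eqref{eqn:aut_system} the operator $\mathcal{P}$ depends only on $x_0$, whereas without the autonomous closed-loop assumption one would need several sensor evaluations of $x(\cdot)$ as in Theorem~3.
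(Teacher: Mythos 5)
Your proof is correct, and it follows the same overall skeleton as the paper (B-spline expansion in $t$ with coefficients that are functions of $x_0$, approximated by a TW-activation network, assembled by the triangle inequality), but it justifies the crucial trunk step by a genuinely different and more self-contained route. The paper disposes of the analogue of your spline estimate, namely $\left\vert \mathcal{P}(x)(t)-\sum_{k=1}^{\ell}c_k(\mathcal{P}(x))B_{k,d}(t)\right\vert\leq\epsilon/2$, by noting that Theorem~1 does not apply to polynomial bases and instead invoking the Weierstrass theorem via Bernstein polynomials and the B\'ezier-to-B-spline conversion, then leaning on the structure of the proof of Theorem~3 in \cite{chen1995universal} for the branch part; it leaves implicit both the uniformity of that bound over the compact set $\mathcal{S}$ and the continuity (and linearity) of the coefficient functionals $c_k$. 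You instead construct the coefficients through a linear quasi-interpolant built from point evaluations, obtain uniformity over the whole family $\Phi(S_0)$ from equicontinuity via Arzel\`a--Ascoli, and then apply Theorem~1 explicitly to the finite family $\{c_k\}\subset C(S_0;\mathbb{R})$, fixing $\delta$ after the spline data so the two error contributions add to $\epsilon$. What your route buys is precisely the two points the paper glosses over: a single $(d,\ell,\text{knots})$ valid uniformly on the compact set, and the continuity of $x_0\mapsto c_k(x_0)$ that legitimizes the branch approximation; what the paper's route buys is brevity, by reducing everything to the known universal-approximation property of polynomial/B-spline bases and reusing the Chen--Chen argument wholesale. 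Your explicit identification $\mathcal{P}(x)(t)=\Phi(x_0)(t)$ via continuous dependence on initial data also makes precise the reduction to a single scalar input that the theorem statement only implies.
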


\begin{proof}
To prove this theorem, we follow a similar approach used in \cite{lu2021learning}. In this case, the branch network $\sum_{i=1}^M c_i^k \sigma\left(\xi_{i}^k x_0 + \xi_0^{ik}\right)$ is expressed in the same form as in Theorem 3, except the trunk network is replaced by the B-spline basis functions $B_{k,d}(t)$. Recalling the proof of Theorem 3 (Theorem 5 in \cite{chen1995universal}), we need to show that we can write 
\begin{equation} \label{proof_cond}
    \left\vert \mathcal{P}(x)(t) - \sum_{k=1}^\ell c_k(\mathcal{P}(x))B_{k,d}(t)\right\vert \leq \epsilon/2    
\end{equation}
Theorem 1 cannot be used since $B_{k,d}(t)$ is a polynomial function. Instead, we can use the Weierstrass approximation theorem \cite{Davidson2009RealAA}. This theorem is proven by using a linear combination of Bernstein polynomials used as basis functions. This linear combination of polynomials is also used to generate B{\'e}zier curves which are a particular subtype of B-spline functions \cite{chudy2023linear}. In particular, it is possible to transform a B{\'e}zier curve into a B-spline and vice-versa \cite{prautzsch2002bezier}, therefore we can switch from a representation with B-spline basis function $B_{k,d}(t)$ to Bernstein polynomials and vice-versa. This implies that we can use B-splines as universal approximators of functions, and for this reason, \eqref{proof_cond} holds. 
\end{proof}

For the case of $x \in \mathbb{R}^n$, it is possible to redefine Theorem 4 as Theorem 2 in \cite{lu2021learning}. Specifically, we need to define one B-spline for each component of the state, and the input to the network is a vector $x_0\in \mathbb{R}^n$, where for each neuron $i$, we have $\sigma\left(\sum_{j=1}^n \xi_{i,j}^k x_0^j+ \zeta_0^{ik} \right)$. 

\subsection{Initial condition and control points mapping}
Let us assume that for a specific initial condition $x_0\in W \subseteq \mathbb{R}$ we compute the particular solution $x(t)$ for $t\in [0,T]$. We can take $N$ samples of the solution $x(t)$ by properly choosing a sampling step $h$ and we can form a vector of measurements $y_m=[x(0), x(h), x(2h), ..., x((N-1)h)]^T$. From \eqref{compact_cB}, we can write
\begin{equation} \label{ls_problem}
    y_m = C_m c_{x_0}; \qquad C_m=\left[\begin{array}{c}
         B_d(0)  \\
         B_d(h) \\
         \vdots\\
         B_d((N-1)h)
    \end{array}\right] \in \mathbb{R}^{N \times \ell}. 
\end{equation}
The vector of control points defined in \eqref{compact_cB} is renamed here as $c_{x_0}$ to indicate the solution to a specific trajectory $x(t)$ generated from $x_0$. From the B-spline properties, the rank of $C_m$ is equal to $\ell$, then it is possible to find the vector of control points $c_{x_0}$ by using the pseudo inverse $c_{x_0}=(C_m^TC_m)^{-1}C_m^Ty_m$ which minimizes $\Vert y_m - C_m c_{x_0} \Vert^2$. As a consequence of the Weierstrass approximation theorem, it follows that by appropriately choosing values for $\ell$ and $h$, the error can be reduced to a desired level.
\[
\vert x(t) - B_d(t) c_{x_0} \vert < \epsilon.
\]
For each initial condition $x_0 \in W$ we can find a unique sequence of control point since the initial control point corresponds with the initial condition itself. This means that there is a mapping $\mathcal{M}$ that associates to each initial condition $x_0\in W$ a sequence of $\ell$ control points. 
From Theorem 4 it follows that 
\begin{equation}
\mathcal{M}(x_0) \approx \hat{\mathcal{M}}(x_0) = \left[\begin{array}{c}
     \sum_{i=1}^M c_i^1 \sigma(\xi_i^1x_0+\zeta_0^{i1})  \\
     \vdots\\
     \sum_{i=1}^M c_i^{\ell} \sigma(\xi_i^{\ell}x_0+\zeta_0^{i\ell})
\end{array} \right]    
\end{equation}
where the neural network is used to approximate the mapping $\mathcal{M}$. In this way an infinite dimensional problem has been reduced to a finite one. The use of B-splines guarantees that the approximation of the solution of $x(t)$ is a continuous function defined for any $t\in [0, T]$. In the next subsection we define the problem for the multidimensional case $x\in \mathbb{R}^n$, formalize the existence of the mapping $\mathcal{M}$, and provide some error bounds.
\subsection{Multidimensional Solution and Error Bounds}
For the general problem where $\hat{\mathcal{P}}(x)$ approximates the solution of $n$ dimensional ODEs such as \eqref{eqn:aut_system},  \eqref{P_approx} can be written as
\begin{equation}
    \hat{\mathcal{P}}(x)(t) = \underline{B}_d(t)\hat{\mathcal{M}}(x_0)
\end{equation}
where $\underline{B}_d(t) \in \mathbb{R}^{n \times (n\cdot \ell)}$ accordingly with \eqref{Bs_n}, and the network $\hat{\mathcal{M}}: \mathbb{R}^n \rightarrow \mathbb{R}^{n\cdot \ell}$ has $n$ inputs and $n \cdot \ell$ outputs. 
\begin{lemma}
    For the autonomous system described by \eqref{eqn:aut_system}, where $x_0$ denotes the initial condition in $W \subset \mathbb{R}^n$, and $x(t)$ denotes a solution for the time interval $[0, T]$, with $h > 0$ representing a sampling time and $\ell > 0$ indicating the number of B-spline control points for each component of the state $x$, there exists a mapping $\mathcal{M}: \mathbb{R}^n \rightarrow \mathbb{R}^{n \cdot \ell}$ such that
    \begin{equation}\label{ls_bound}
        \Vert x(t) - \underline{B}_d(t)\mathcal{M}(x_0)\Vert \leq \epsilon
    \end{equation}
    where, $\underline{B}_d(t) \in \mathbb{R}^{n \times (n \cdot \ell)}$.
\end{lemma}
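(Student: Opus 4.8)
The plan is to reduce the $n$-dimensional statement to $n$ independent scalar approximation problems, one per state component, and then reassemble the control points into a single vector. First I would fix $x_0 \in W$ and use the Lipschitz hypothesis on $f$ together with the local Cauchy theorem recalled in Section~\ref{subsec_A}: the trajectory $x(t)$ on $[0,T]$ is the \emph{unique} solution of \eqref{eqn:aut_system} issuing from $x_0$, so each component $x_i:[0,T]\to\mathbb{R}$ is a continuous function that is completely determined by $x_0$. This uniqueness is exactly what turns the assignment $x_0 \mapsto \underline{c}_{x_0}$ into a genuine single-valued mapping $\mathcal{M}$ rather than a relation.

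Second, for each component $x_i(t)$ I would invoke the scalar B-spline approximation argument already used in the proof of Theorem~4 — Weierstrass' theorem phrased in the Bernstein/B-spline basis. With a fixed degree $d$, a suitable open quasi-uniform knot vector, and $\ell$ large enough (and $h$ small enough, as permitted by that bound), there is a control-point vector $c_{x_0,i}\in\mathbb{R}^\ell$ with $|x_i(t)-B_d(t)c_{x_0,i}|<\epsilon/\sqrt{n}$ for all $t\in[0,T]$. A canonical choice is the discrete least-squares fit $c_{x_0,i}=(C_m^TC_m)^{-1}C_m^Ty_{m,i}$ from \eqref{ls_problem}, which is well defined because the B-spline properties force $\mathrm{rank}(C_m)=\ell$. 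Stacking the components, I would set $\mathcal{M}(x_0)\triangleq[\,c_{x_0,1}^T,\dots,c_{x_0,n}^T\,]^T\in\mathbb{R}^{n\cdot\ell}$, which depends on $x_0$ alone by the uniqueness step.

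Third, I would assemble the bound. Since $\underline{B}_d(t)=\mathrm{diag}[B_d(t)]$ as in \eqref{Bs_n}, the $i$-th entry of $x(t)-\underline{B}_d(t)\mathcal{M}(x_0)$ equals $x_i(t)-B_d(t)c_{x_0,i}$, hence
\[
\Vert x(t)-\underline{B}_d(t)\mathcal{M}(x_0)\Vert=\Big(\textstyle\sum_{i=1}^n|x_i(t)-B_d(t)c_{x_0,i}|^2\Big)^{1/2}<\sqrt{n}\cdot\frac{\epsilon}{\sqrt{n}}=\epsilon
\]
uniformly in $t\in[0,T]$, which is the claimed estimate \eqref{ls_bound}.

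The step I expect to be the real obstacle is the one hidden in the second paragraph: certifying that the \emph{pointwise} least-squares fit at the $N$ samples actually meets a bound that must hold \emph{uniformly} in $t$. This requires showing that, as $\ell\to\infty$ and $h=T/(N-1)\to 0$ with quasi-uniform knots, the discrete projection onto the spline space has a uniformly bounded Lebesgue constant, so that it inherits the $O(h^{d+1})$ best-approximation rate guaranteed by the Weierstrass/de~Boor theory invoked in Theorem~4; only then does the uniform component error fall below $\epsilon/\sqrt n$. Everything else — uniqueness of the ODE solution, full column rank of $C_m$, and the norm comparison $\Vert v\Vert\le\sqrt{n}\,\Vert v\Vert_\infty$ — is routine.
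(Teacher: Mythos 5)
Your proposal is correct and follows essentially the same route as the paper's own proof: componentwise B-spline least-squares fitting via \eqref{ls_problem}, stacking into the block-diagonal system with $\underline{C}_m=\mathrm{diag}[C_m]$ whose full column rank gives a unique control-point vector and hence a well-defined mapping $\mathcal{M}$, with the error controlled by the Weierstrass-type approximation argument. If anything, you are more careful than the paper, which silently passes from the discrete least-squares fit to the uniform-in-$t$ bound \eqref{ls_bound} — the very point you flag — and does not spell out the $\epsilon/\sqrt{n}$ per-component split.
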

\begin{proof}
    For each $i$-th component of a solution $x(t)$ obtained for a specific initial condition $x_0 \in W$, we can formulate the least squares problem \eqref{ls_problem}, where $y_{m,i}=C_{m}c_{i,x_{0}}$. Here, we define:
    \[
    \underline{y}_m = \left[y_{m,1}^T, \cdots, y_{m,n}^T \right], \; \underline{C}_m =\mathrm{diag}[C_m]
    \]
    and $\underline{c}_{x_0}=\left[{c}_{1,x_0}^T,\cdots, {c}_{n,x_0}^T \right]^T$. This problem assumes the least squares form $\underline{y}_m=\underline{C}_m\underline{c}_{x_0}$, which can be solved using the pseudo-inverse. As $\underline{C}_m$ is a block diagonal matrix where each block has $\ell$ independent columns, a unique sequence of control points is generated by the least squares. Consequently, there exists a mapping $\mathcal{M}$ that maps any $x_0$ into the B-spline control points domain, ensuring \eqref{ls_bound} holds.
\end{proof}
Similarly to the approach in \cite{qin2021data}, where a FNN is employed to approximate the one-step evolution function, we utilize a generalization of Theorem 1 in \cite{pinkus1999approximation} to establish the following proposition:

\begin{proposition}
Considering the conditions of Lemma 1 and a mapping $\mathcal{M}$ satisfying \eqref{ls_bound}, there exists an FNN $\hat{\mathcal{M}}$ such that 
\begin{equation}
    \Vert \mathcal{M}(\cdot) - \hat{\mathcal{M}}(\cdot)\Vert \leq \gamma
\end{equation}
for any $x_0 \in W$.
\end{proposition}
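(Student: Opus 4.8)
\section*{Proof Proposal}

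The plan is to prove the proposition in two stages: first show that the mapping $\mathcal{M}$ produced by Lemma 1 is continuous on the compact set $W$, and then apply a vector-valued form of the universal approximation theorem for functions to obtain the FNN $\hat{\mathcal{M}}$.

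For the continuity stage, I would write $\mathcal{M}$ as a composition. Fix $h>0$ and $N$ as in \eqref{ls_problem}. The first map sends $x_0\in W$ to the sampled trajectory $\underline{y}_m=[x(0)^T,x(h)^T,\dots,x((N-1)h)^T]^T$, where $x(\cdot)$ is the unique solution of \eqref{eqn:aut_system} with $x(0)=x_0$; uniqueness is guaranteed by the Lipschitz hypothesis. This map is continuous --- in fact Lipschitz --- because the standard continuous-dependence estimate obtained from Gr\"onwall's inequality gives $\Vert x(t;x_0)-x(t;x_0')\Vert\le e^{LT}\Vert x_0-x_0'\Vert$ for all $t\in[0,T]$, so evaluation at the finitely many times $jh$ depends Lipschitz-continuously on $x_0$. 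The second map sends $\underline{y}_m$ to $\underline{c}_{x_0}=(\underline{C}_m^T\underline{C}_m)^{-1}\underline{C}_m^T\underline{y}_m$; by Lemma 1 the block-diagonal design matrix $\underline{C}_m$ has full column rank, so this is a fixed bounded linear operator, hence continuous. Therefore $\mathcal{M}(x_0)\triangleq\underline{c}_{x_0}$ defines a continuous (indeed Lipschitz) map $W\to\mathbb{R}^{n\cdot\ell}$, with Lipschitz constant bounded by $e^{LT}$ times the operator norm of the pseudo-inverse.

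For the approximation stage, I would apply the universal approximation theorem for continuous functions --- Theorem 1 of the excerpt, or equivalently the multivariate form of Theorem 1 in \cite{pinkus1999approximation} --- componentwise to the $n\cdot\ell$ scalar entries $\mathcal{M}_q:W\to\mathbb{R}$. Each $\mathcal{M}_q$ is continuous on the compact set $W\subset\mathbb{R}^n$, so there is a single-hidden-layer network $\hat{\mathcal{M}}_q(x_0)=\sum_{i=1}^{M}c_i^q\sigma(\xi_i^{q\,T}x_0+\zeta_0^{iq})$ with $\sup_{x_0\in W}|\mathcal{M}_q(x_0)-\hat{\mathcal{M}}_q(x_0)|<\gamma/\sqrt{n\ell}$. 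Stacking these into the vector-valued FNN $\hat{\mathcal{M}}=[\hat{\mathcal{M}}_1,\dots,\hat{\mathcal{M}}_{n\ell}]^T$ --- which has exactly the architecture displayed after Theorem 4 --- and taking the Euclidean norm over components yields $\Vert\mathcal{M}(x_0)-\hat{\mathcal{M}}(x_0)\Vert\le\gamma$ for all $x_0\in W$.

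The main obstacle is the continuity of $\mathcal{M}$: the least-squares construction is not a priori continuous, and the one genuinely nontrivial ingredient is that the sampled solution depends continuously on the initial condition --- which is precisely where the Lipschitz assumption on $f$ (together with the standing assumption that the solution remains on $[0,T]$ in the region where this bound holds) is used, via Gr\"onwall. Once continuity is secured, the rest is a routine invocation of the universal approximation theorem plus the bookkeeping to pass from scalar to vector outputs; the only care needed there is that a non-polynomial (TW) activation is used, since, as noted in the proof of Theorem 4, polynomial activations are excluded.
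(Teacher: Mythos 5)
Your proposal is correct and follows the same essential route the paper intends: the paper does not actually prove Proposition 1, it simply invokes a vector-valued generalization of Theorem 1 of \cite{pinkus1999approximation} (citing the style of \cite{qin2021data}), which is exactly your second stage --- componentwise universal approximation on the compact set $W$ with a non-polynomial activation, followed by stacking and the $\gamma/\sqrt{n\ell}$ bookkeeping. Where you go beyond the paper is the first stage: the universal approximation theorem requires $\mathcal{M}$ to be continuous on $W$, and the paper leaves this implicit (Lemma 1 only establishes that the mapping exists via the full-rank least-squares solution). Your argument --- continuous (indeed Lipschitz, via Gr\"onwall) dependence of the sampled trajectory on $x_0$, composed with the fixed bounded linear pseudo-inverse $(\underline{C}_m^T\underline{C}_m)^{-1}\underline{C}_m^T$ --- is precisely the missing justification, so your write-up is a more complete version of the paper's claim rather than a different proof strategy.
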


We define $\hat{x}(t) \triangleq \underline{B}_d(t) \hat{\mathcal{M}}(x_0)$ and $\tilde{x}(t) \triangleq \underline{B}_d(t) {\mathcal{M}}(x_0)$, where $\mathcal{M}$ and $\hat{\mathcal{M}}$ retain their meanings from Proposition 1.

\begin{lemma}
Under the conditions of Lemma 1, for the problem of finding $\hat{\mathcal{P}}(x)$, there exists $M>0$ such that
\begin{equation}
    \Vert x(t) - \hat{x}(t)\Vert \leq \epsilon + M\gamma
\end{equation}
for any $t\in [0,T]$, where $\epsilon$ and $\gamma$ are the bounds related to the least squares and FNN approximation, respectively.
\end{lemma}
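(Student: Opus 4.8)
The plan is to bound the total error by inserting the intermediate quantity $\tilde{x}(t) = \underline{B}_d(t)\mathcal{M}(x_0)$ and applying the triangle inequality,
\[
\Vert x(t) - \hat{x}(t)\Vert \leq \Vert x(t) - \tilde{x}(t)\Vert + \Vert \tilde{x}(t) - \hat{x}(t)\Vert .
\]
The first term on the right is precisely the least-squares / B-spline representation error studied in Lemma 1, hence it is bounded by $\epsilon$ uniformly for $t\in[0,T]$. So the remaining work is entirely about the second term.

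For the second term, observe that $\tilde{x}(t) - \hat{x}(t) = \underline{B}_d(t)\big(\mathcal{M}(x_0) - \hat{\mathcal{M}}(x_0)\big)$, so using the compatibility of the matrix norm induced by the Euclidean vector norm,
\[
\Vert \tilde{x}(t) - \hat{x}(t)\Vert \leq \Vert \underline{B}_d(t)\Vert\,\Vert \mathcal{M}(x_0) - \hat{\mathcal{M}}(x_0)\Vert \leq \Vert \underline{B}_d(t)\Vert\,\gamma ,
\]
where the last step is Proposition 1. It then remains to show that $\underline{B}_d(t)$ has a uniformly bounded norm on the interval: I would define $M \triangleq \max_{t\in[0,T]}\Vert \underline{B}_d(t)\Vert$ and argue this maximum is finite and attained. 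This is exactly where Property 3 of B-splines is used: each basis function $B_{k,d}(\cdot)$ is continuous and bounded on the compact set $[0,T]$, hence so is every entry of the block-diagonal matrix $\underline{B}_d(t) = \mathrm{diag}[B_d(t)]$; a matrix-valued map with continuous entries induces a continuous norm function, which therefore attains a finite maximum on $[0,T]$.

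Combining the two estimates yields $\Vert x(t) - \hat{x}(t)\Vert \leq \epsilon + M\gamma$ for every $t\in[0,T]$, which is the claim.

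The only point requiring care — and what I would flag as the ``main obstacle,'' though it is more bookkeeping than difficulty — is keeping the constant $M$ honest: one must use the matrix norm induced by the Euclidean norm (so the submultiplicative step above is legitimate) and must justify the finiteness of $M$ through continuity of the B-spline basis on a compact interval rather than simply asserting it. Beyond this, the proof is a direct chaining of the triangle inequality with Lemma 1 and Proposition 1, and I do not expect any genuine complication.
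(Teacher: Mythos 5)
Your proof is correct and follows essentially the same route as the paper: the same triangle-inequality decomposition through $\tilde{x}(t)=\underline{B}_d(t)\mathcal{M}(x_0)$, with Lemma 1 bounding the first term by $\epsilon$ and Proposition 1 together with the boundedness of the B-spline basis giving the $M\gamma$ term. Your additional care in defining $M=\max_{t\in[0,T]}\Vert\underline{B}_d(t)\Vert$ via continuity on a compact interval is a slightly more explicit justification than the paper's appeal to boundedness, but it is the same argument.
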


\begin{proof}
For any $t\in [0,T]$, we can write
\begin{equation}
    \begin{split}
        \Vert x(t) - \hat{x}(t) \Vert &= \Vert x(t) - \hat{x}(t) + \tilde{x}(t) - \tilde{x}(t) \Vert\\
        &\leq \Vert x(t) - \tilde{x}(t) \Vert + \Vert \tilde{x}(t) - \hat{x}(t)\Vert
    \end{split}  
\end{equation}
From \eqref{ls_bound} of Lemma 1, we have $\Vert x(t) - \tilde{x}(t) \Vert\leq \epsilon$, whereas
\begin{equation}
    \begin{split}
        \Vert \tilde{x}(t) - \hat{x}(t)\Vert&=\Vert \underline{B}_d(t)(\mathcal{M}(x_0)-\hat{\mathcal{M}}(x_0))\Vert\\
        &\leq \Vert \underline{B}_d(t)\Vert \Vert\mathcal{M}(x_0)-\hat{\mathcal{M}}(x_0) \Vert \leq M\gamma
    \end{split}
\end{equation}
where $\gamma$ results from Proposition 1, and $M$ results from the boundedness of the B-spline basis functions \cite{prautzsch2002bezier}.
\end{proof}

\section{Experiments}
We demonstrate the value of this technique through several experiments with a quadrotor with nonlinear dynamics and controlled with a linear quadratic regulator (LQR) controller \cite{romagnoli2023software}. The input in every experiment was the 12-dimensional initial condition, consisting of the 3-D position, velocity, angular orientation, and angular velocity. The output is a set of control points given a known sampling interval and time horizon. 

We generated data by randomly choosing initial conditions within a 12-D ball limited by 2 meters in each spatial dimension, 2 meters per second in each velocity dimension, $\pi/4$ radians in each angular dimension and 5 radians per second in each angular velocity dimension. All generated trajectories assumed a desired equilibrium point at $\vec{0}$ and we refer to it as $x_{eq}$. For each trajectory, the controls points that define each trajectory were calculated using least squares fitting and a 3rd order B-spline ($d=3$). For all experiments we used 5000 initial conditions for training. To augment our dataset and encourage rotational equivariance, we rotated our randomly generated initial conditions by $\pi$, $-\pi/2$, $-\pi/4$, $-\pi/6$, $\pi/4$,$\pi/3$, $\pi/2$ radians around the Z axis. 

In our first experiment, we set the number of control points to 50, using a time horizon of 2.5 seconds. We used a simple fully-connected neural network (FCNN) with 12 layers. Each hidden layer has a width of 120 neurons and the input and output layers are dictated by the space dimension (12) and number of control points (50). Therefore the output layer has a width of 600. We optimized over the mean squared error between the predicted sampled trajectory and the trajectory found through least squares fitting. In order to examine model variance, training was reinitialized 10 times and training was stopped after 2 hours or when the loss had reached about $10^{-5}$,which ever condition was satisfied first. One model out of the 10 attempts did not converge within this loss condition and, thus, was not used in the subsequent analysis.

A FCNN does not exploit the physical properties of the system. In our second experiment we combined a gated recurrent unit (GRU) in combination with a FCNN in a recurrent neural network (RNN) architecture. Both the GRU and the subsequent fully connected layers had a width of 120. The fully connected layers had a depth of 3. This resulted in a lower validation loss (mean squared error) for fewer than half as many parameters, as can be seen in Table \ref{tab:time_comparisons}. 

\section{Results}
We used the trajectories and control points found using least squares fitting via singular value decomposition with Python and Numpy as as ground truth in training the neural networks. In order to evaluate and compare across our experiments, we evaluated against a test set containing 1000 new initial conditions. In Table \ref{tab:time_comparisons} we compare the timing between different experiments and the ODE Solver We find that the FCNNs is 5 times faster than least squares fitting and the RNN is 5 times slower than SciPy ODE solver. The increase in evaluation time between the FCNN and the RNN is a consequence of a for-loop required to calculate the entire trajectory. Notably, the ODE solver used is a python wrapper around a ODEPACK library whereas the neural networks were built using PyTorch. Therefore, we advise caution when comparing these evaluation times.

\begin{table*}
\centering
\caption{Time comparisons for different methods.}
\begin{tabular}{|r|l|l|l|l|l|l|l}\hline
Method & T[sec] & $\ell$ &   \# Parameters & Mean Final MSE (Loss)& \# Models & Evaluation Time [sec]\\ \hline
ODE solver & 2.5 & 50 & - & - &  - & $0.0049\pm 0.0015$ \\
FCNN & 2.5 & 50  & 219360 & $1.4 \pm 0.2 \times 10^{-5}$& 9 & $0.00097 \pm 0.00023$ \\
RNN & 2.5 & 50 & 78732 & $2.9 \pm 0.6 \times 10^{-6}$& 10 &$0.028\pm 0.003$ \\\hline
\end{tabular}
\label{tab:time_comparisons}
\end{table*}

In both cases, the error is significantly correlated with the initial condition, as parameterized by the distance in the 12-D ball of the initial condition. This can be seen in Figure \ref{fig:r_vs_e}. This result is aligned with several factors: 1) Since $x_{eq}$ is asymptotically stable, the radius of the initial condition in the unit ball serves as a measure of the distance the quadrotor is from it; 2) Our initial conditions were randomly sampled within the cube of the radius to balance comprehensive sampling throughout the volume and sampling at different radii. Consequently, the density of training points decreases with the radius.


\begin{figure}
\begin{subfigure}[t]{0.45\textwidth}
\includegraphics[width=3.3in]{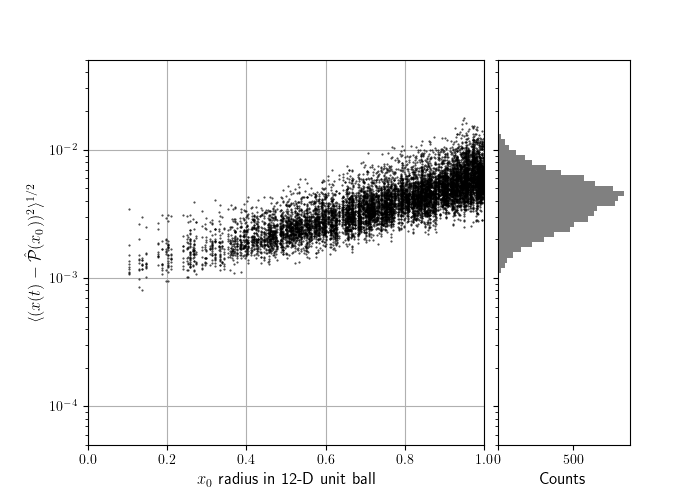}
\end{subfigure}

\begin{subfigure}[t]{0.45\textwidth}
\includegraphics[width=3.3in]{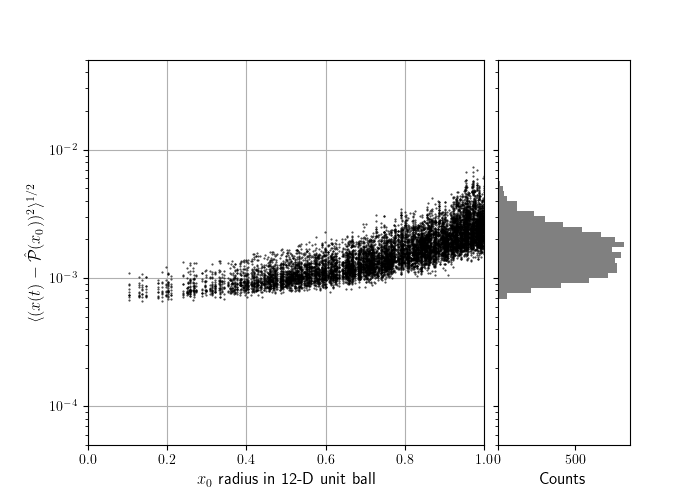}
\end{subfigure}
\caption{ (top) FCNN and (bottom) RNN root mean squared error versus initial condition radius. The error is positively correlated with distance.The overall error can be seen in the profile histogram right of the scatter plot.}

\label{fig:r_vs_e}
\end{figure}

Because summary statistics can only provide a partial understanding of our prediction accuracy, we chose four specific initial conditions to analyze qualitatively by varying the initial condition in a single dimension. These are shown in Figure \ref{fig:ic_assess}. The initial conditions we considered were (with all unspecified values set to zero):
\begin{itemize}
\item  $x^{y}_{0} = (0.8 \, m, 1.2\,m)$ - Red (solid and dotted) lines indicate the least-squares ("ground truth") trajectories and the pink shading indicates the range of the respective neural-network approximated trajectories.
\item   $x^{y}_{0} = 1.0 \,m , x^{dy}_{0} = (0.8 \;m/s, 1.2\, m/s)$ - Blue lines and shaded regions.
\item   $x^{y}_{0} = -1.0 \,m , x^{d\theta}_{0} = (4 \,rad/s, 6 \; rad/s)$ - Green lines and shaded regions.

\item   $x^{y}_{0} = -1.0 \,m ,\; x^{d\theta_y}_{0} =5 \, rad/s, x^{\theta_y}_{0} =\pi/16\cdot(3, 5)$ - Orange lines and shaded regions.
\item $x_{0} = R/2$ where R is the maximum value in any dimension, except for $x_z = -R/2$. Black lines and gray shaded regions. In this instance, all directions were varied slightly.

\end{itemize}

\begin{figure*}[t!]
\centering
\begin{subfigure}[t]{0.48\textwidth}
    \centering
    \includegraphics[scale=0.4,trim={0 2cm 0 3.5cm},clip]{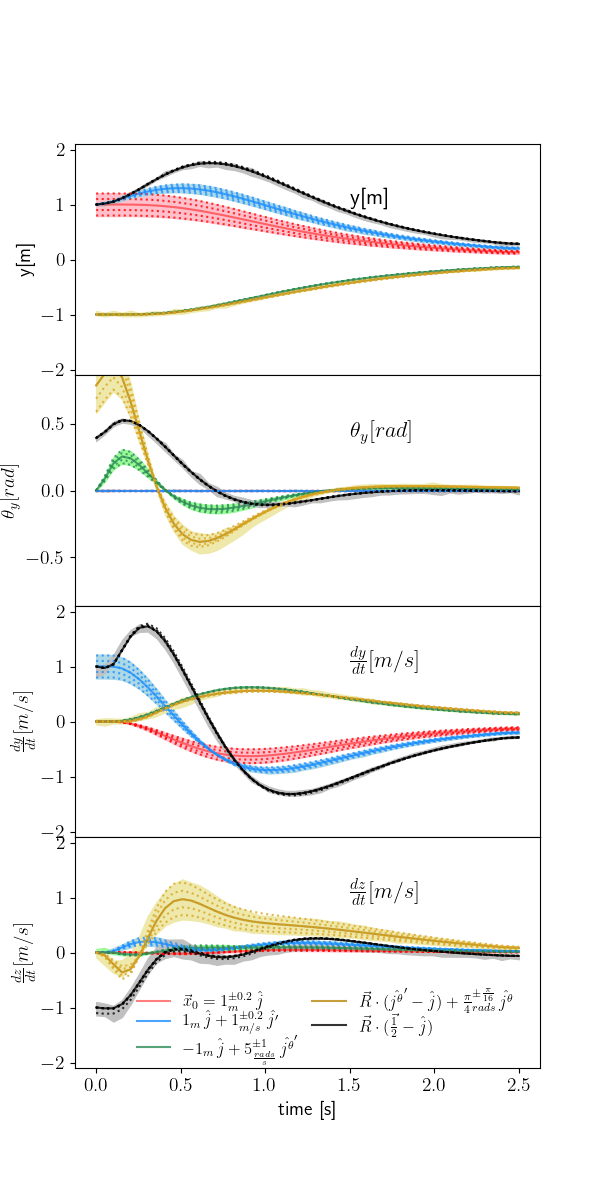}
\end{subfigure}\begin{subfigure}[t]{0.48\textwidth}
    \centering
    \includegraphics[scale=0.4,trim={0 2cm  0 3.5cm},clip]{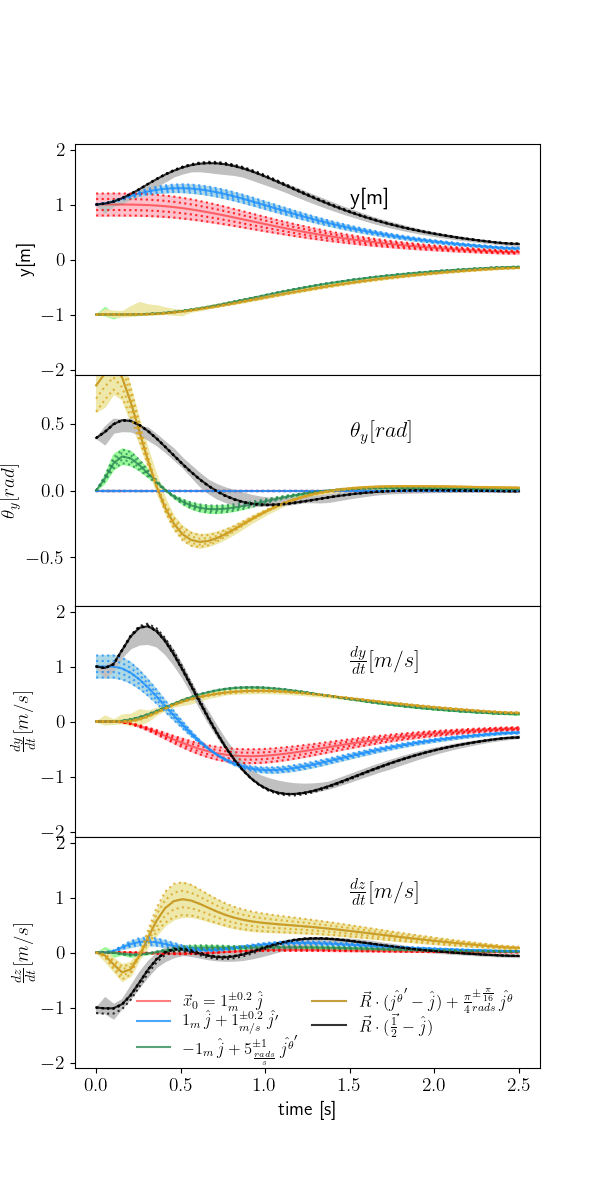}
\end{subfigure} 
\caption{ (left) FCNN and (right) RNN predicted and least-squares fitted trajectories for sets of initial condition, varied about a central value as indicated on the legend. Only four dimensions are shown to conserve space.}
    \label{fig:ic_assess}
\end{figure*}

This provides anecdotal evidence of the per-trajectory fit and shows how variation due to the neural network architecture and training compare to variation in the initial values. The variation due to initial condition variation is present in both the neural network approximations (shaded regions) and least squares fit of spline control points to the solution found with an ODE solver (lines). However, the shaded regions depicting model variations have uncertainty contributions from both initial conditions \textit{and} model stochasticity. Notably, for both the RNN and the FCNN, the variance in initial conditions dominates except for the final case where all we dimensions are significantly non-zero. This agrees with the earlier assessment that the primary indicator of uncertainty is the proximity of the initial condition to the edge of the ball of potential initial conditions. Furthermore, excepting the same case, the least squares trajectory is always within the trajectories indicated by the neural network. Note that only four of twelve dimensions are shown for brevity.

If the resulting neural operator (neural network plus b-spline system) reflects the invariances and symmetries of the actual system, there are stronger guarantees that the neural operator will generalize to new, unseen conditions. In Figure \ref{fig:rotation}, we show the difference root mean squared difference between our original testing dataset and the test dataset rotated to 85 evenly spaced angles between $\pi/8$ and $15\pi/8$ and the resulting trajectory, rotated back. An ideal equivariant operator has the property $R_{\theta}^T\mathcal{P}(R_{\theta} x_{0})  = R_{\theta}^T R_{\theta}\mathcal{P}(x_{0}) = \mathcal{P}(x_{0})$
and the root mean squared difference, $\langle(R_{\theta}^T \mathcal{P}(R_{\theta}x_{0})- \mathcal{P}(x_{0}))^2\rangle^{1/2} =0 $. We find that this deviation is comparable to the overall error exhibited by the respective neural networks. Notably, the RNN has a long low-RMSE tail. We suggests that this indicates an recurrent or residual architecture is more well adapted to this problem and we may be able to improve accuracy with a refined recurrent or residual network architecture. 

\begin{figure}[t!]

\centering
\begin{subfigure}[t]{0.5\textwidth}
\includegraphics[width=3.5in]{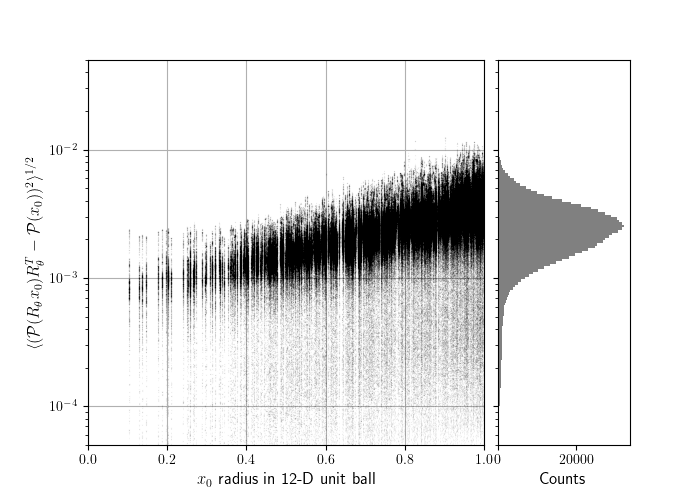}
\end{subfigure}

\begin{subfigure}[t]{0.5\textwidth}
\includegraphics[width=3.5in]{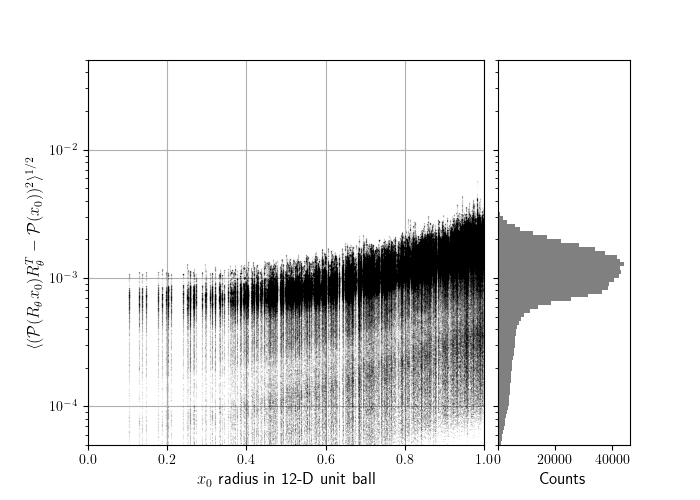}
\label{fig:rnn_rotation_r_vs_e}
\end{subfigure}

\caption{The initial condition is rotated around the Z axis to 85 different angles from $\pi/8$ to $15\pi/8$. The (top) FCNN or (bottom) RNN is applied. The root mean squared difference is compared to the radius of the initial condition in the 12-D ball.}
\label{fig:fcnn_rotation r_vs_e}
\label{fig:rotation}
\end{figure}
\section{Conclusions}
We are exploring various improvements and adjustments to the architecture and training of the neural network that could enhance performance. Notably, neural architectures like residual blocks naturally capture the iterative nature of sequential control points. Additionally, incorporating a long-short-term memory cell (LSTM) could enhance temporal smoothness. Furthermore, embedding the rotational symmetries of the quadrotor into the neural network design using equivariant network structures for dynamical systems \cite{villar_scalars_2021, wang_incorporating_2021}, such as group equivariant convolutional neural networks or SympNets \cite{pmlr-v119-romero20a, jin2020sympnets}, may also improve performance.

In addition to enhancing the contributions presented in this paper, our next endeavor involves developing a framework to estimate and fine-tune the neural network errors by leveraging the B-spline convex hull property. Ensuring that the actual state trajectory resides within the convex hull of the control points enables real-time evaluations of safety violations. Furthermore, a crucial aspect of our future work entails extending this framework to non-autonomous systems, which is pivotal for advancing the development of neural network-based controllers.


\section{acknowledgements}
Copyright 2024 Carnegie Mellon University and Raffaele Romagnoli

This material is based upon work funded and supported by the Department of Defense under Contract No. FA8702-15-D-0002 with Carnegie Mellon University for the operation of the Software Engineering Institute, a federally funded research and development center.

[DISTRIBUTION STATEMENT A] This material has been approved for public release and unlimited distribution.  Please see Copyright notice for non-US Government use and distribution.
DM24-0319

\bibliographystyle{ieeetr}
\bibliography{biblio.bib}
\end{document}